\numberwithin{equation}{section}
\DeclareMathOperator*{\argmax}{arg\,max}
\DeclareMathOperator*{\argmin}{arg\,min}
\newcommand{\F}{\mathcal{F}}
\newcommand{\f}{\mathbf{f}}
\newcommand{\mL}{\mathcal{L}}
\newcommand{\R}[1]{\mathbb{R}^{#1}}
\newcommand{\I}{\mathcal{I}}
\newcommand{\X}{\mathcal{X}}
\renewcommand{\I}{\mathcal{I}}
\newcommand{\N}{\mathcal{N}}
\newtheorem{theorem}{Theorem}[section]
\newtheorem{lemma}{Lemma}[section]
\newtheorem{definition}{Definition}
\definecolor{dkgreen}{rgb}{0,0.6,0}
\definecolor{gray}{rgb}{0.5,0.5,0.5}
\definecolor{mauve}{rgb}{0.58,0,0.82}
\definecolor{codegreen}{rgb}{0,0.6,0}
\definecolor{codegray}{rgb}{0.5,0.5,0.5}
\definecolor{codepurple}{rgb}{0.58,0,0.82}
\definecolor{backcolour}{rgb}{0.95,0.95,0.92}
\begin{document}

\title{Move Beyond Trajectories: \\
Distribution Space Coupling for Crowd Navigation}

\author{\authorblockN{Muchen Sun\authorrefmark{1},
Francesca Baldini\authorrefmark{2}\authorrefmark{3}
Peter Trautman\authorrefmark{2},
and Todd Murphey\authorrefmark{1}}
\authorblockA{\authorrefmark{1}Department of Mechanical Engineering, Northwestern University, Evanston, IL 60208, USA}
\authorblockA{\authorrefmark{2}Honda Research Institute, San Jose, CA 95134, USA}
\authorblockA{\authorrefmark{3}California Institute of Technology, Pasadena, CA 91125, USA}}

\maketitle

\begin{abstract}

Cooperatively avoiding collision is a critical functionality for robots navigating in dense human crowds, failure of which could lead to either overaggressive or overcautious behavior. A necessary condition for cooperative collision avoidance is to couple the prediction of the agents' trajectories with the planning of the robot's trajectory.  However, it is unclear that \emph{trajectory} based cooperative collision avoidance captures the correct agent attributes. In this work we migrate from trajectory based coupling to a formalism that couples agent preference \emph{distributions}. In particular, we show that preference distributions (probability density functions representing agents' intentions) can capture higher order statistics of agent behaviors, such as willingness to cooperate. Thus, coupling in distribution space exploits more information about inter-agent cooperation than coupling in trajectory space. We thus introduce a general objective for coupled prediction and planning in distribution space, and propose an iterative best response optimization method based on variational analysis with guaranteed sufficient decrease. Based on this analysis, we develop a sampling-based motion planning framework called \emph{DistNav}\footnote{For more details please visit \url{https://sites.google.com/view/distnav/}} that runs in real time on a laptop CPU. We evaluate our approach on challenging scenarios from both real world datasets and simulation environments, and benchmark against a wide variety of model based and machine learning based approaches. The safety and efficiency statistics of our approach outperform all other models.  Finally, we find that DistNav is competitive with \emph{human} safety and efficiency performance.

\end{abstract}

\IEEEpeerreviewmaketitle

\section{Introduction} \label{sec: introduction}
\begin{figure}
    \centering
    \includegraphics[width=200pt]{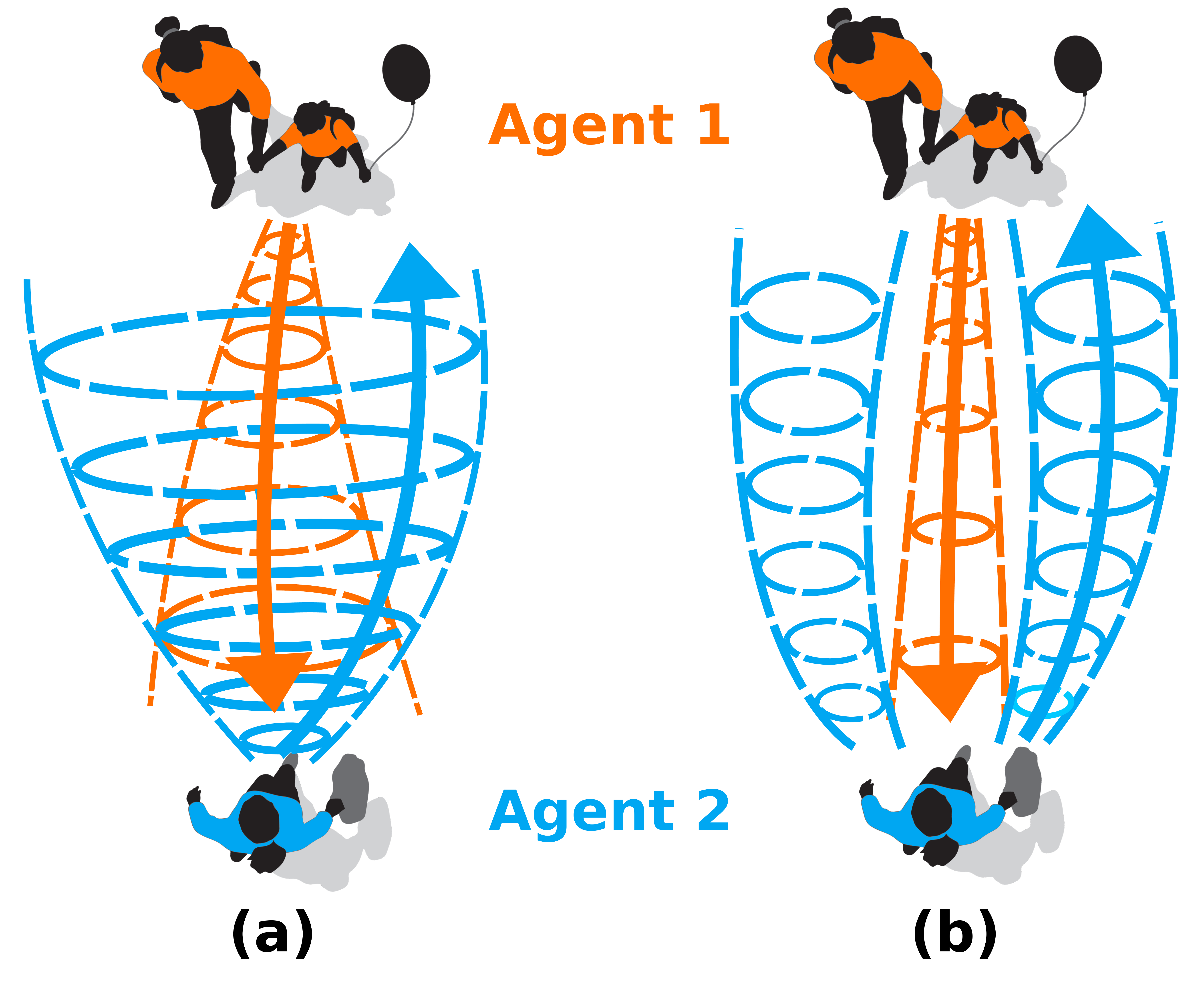}
    \caption{Difference between modeling humans as specific trajectories and distributions: Agent 2 at the bottom (pictured as a human but also can be a robot) needs to plan a path while predicting how the other two pedestrians, who are a family and thus considered as a single agent 1, would react to agent 2's decision. The dashed circles represent each agent's probability distribution (preference) at each time step, here we assume agent 1 has a lower flexibility and thus agent 1 would expect agent 2 to cooperate more to make space for each other. \textbf{(a) By modeling humans as \emph{trajectories}, agent 2 implicitly assumes the preferences of both agents are fixed in the presence of interaction, which is not necessarily true. (b) Modeling humans as \emph{distributions} can overcome this issue, where agent 2's intention evolves to a bi-modal distribution because of the potential interaction, and the trajectory with maximum likelihood (go right) would eventually be chosen as the plan.} While both methods can recover a proper path for agent 2 to avoid splitting up the family, \textbf{explicitly modeling the evolution of preferences captures more information about the interaction.}}
    \label{fig:trajectory_vs_distribution}
    \vspace{-15pt}
\end{figure}

\begin{figure*}
    \centering
	\includegraphics[width=\textwidth]{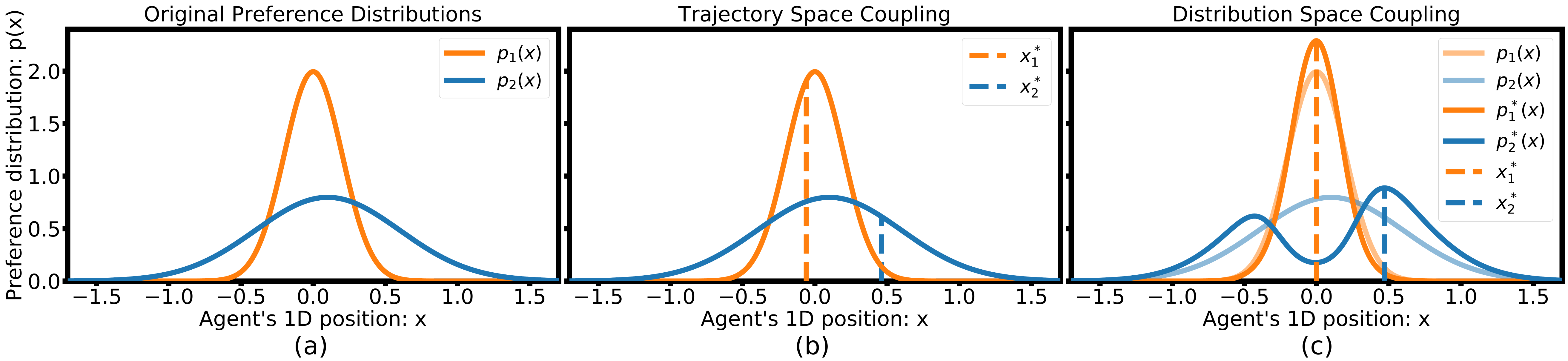}
    \caption{Simplified 1D example showing two agents bypassing each other, in this example each agent only considers its 1D location at next one step. (a) Original preference distributions of the two agents, modeled as two Gaussians. (b) Coupled prediction and planning in trajectory space, where agent 2 finds its optimal plan (blue dashed line on the right) and optimal prediction for agent 1's plan (orange dashed line near the center) simultaneously. Again, agent 2 implicitly assumes two agents' preferences remain static in presence of interaction. (c) \textbf{Our method relaxes this assumption, and allows preference distributions to evolve in a non-parametric manner}. Agent 2's intention can be explicitly modeled as a bi-modal distribution, it can both go right or go left, but going right is more preferred. Optimal plan (blue dashed line) and prediction (orange dashed line) are then picked by agent 2 from its belief for updated preferences.}
	\label{fig: conflict_1}
	\vspace{-15pt}
\end{figure*}  

Collision avoidance in dense human crowds is a challenging problem. Whereas conventional motion planning algorithms work well with slowly moving obstacles and low obstacle density, they are designed to work with passive obstacles that will not react to the robot, and assume the robot has full knowledge about the obstacles' future states. Such assumptions, however, no longer hold in human crowds and could lead to either overcautious or overaggressive robot behaviors, which is known as the freezing robot problem \cite{trautman-ijrr-2015}. Previous work on crowd navigation reveals that one necessary condition to avoid freezing robot behavior is to couple the prediction of the agent's future trajectories with robot planning (e.g., accounting for the robot's influence on agent behavior).  

However, pure trajectory-based models make strong implicit assumptions about the distribution governing interactive agent behaviors (what we call the agent's ``preference distribution'').  For example, the \emph{extent} to which an agent is willing to move out of the way of the robot is typically left unmodeled.   Further, trajectory space approaches assume that the (implicitly modeled)  preference distribution is a static quantity: \emph{during} interaction, the preference distribution does not change.  As an example, consider unimodal Gaussian agent models, where the covariance can be interpreted as the extent to which an agent will make room for another agent.  Critically, as two agents proceed past each other, their covariance will change. If an agent moves far out of the way of the path of the robot, its covariance will become more peaked, indicating that the agent is only willing to make so much room. For non-Gaussian agent distributions, higher order factors such as skew (e.g., preference for left versus right passage) and multi-modality (e.g., ambiguity about which side an agent might pass on), are themselves coupled to the deformation of the agent trajectories. In short, assuming that interaction occurs \emph{only} at the trajectory level ignores critical information about how interaction deforms higher order factors of the agent model (e.g., how interaction deforms the preference distribution). Figure~\ref{fig:trajectory_vs_distribution} illustrates the necessity of modeling agent preference evolution through a high-level example that is common in crowd navigation---two agents passing each other on the street.

We relax this assumption of a known and fixed preference distribution.  Specifically, we do not require the agent's preference distribution to remain static during interaction, but instead allow the preference distribution to evolve (Figure~\ref{fig: conflict_1} demonstrates the preference evolution in a 1D example). We accomplish this by:
\begin{enumerate}
    \item Taking the atomic representational unit to be a probability density function over agent trajectories, in what we call \emph{distribution space}.
    \item Including both human-robot and human-human interaction.
    \item Developing a method to jointly optimize over both robot and agent preference distributions. 
\end{enumerate}

In this work distributions represent agents' preferences over trajectories---including both what they want and their potentially multi-modal degree of flexibility.  This use of distributions is distinct from that found in other computational methods that use distributions are their atomic unit, such as belief space planning, where the purpose of a distribution is to represent uncertainty (for example,~\cite{rssac-crowd-nav} uses belief space planning for crowd navigation but decouples prediction and planning).  Here we develop a formalism and algorithm to ``shape'' probability density functions for better cooperation, even for systems with no uncertainty present.  

At last, we summarize our contributions as follow:
\begin{enumerate}
    \item We formulate a general objective for coupled prediction and planning in distribution space (Section \ref{sec: problem-formulation}). 
    \item We propose an iterative best response optimization algorithm which is guaranteed to decrease the objective in each iteration, and design a sampling-based navigation framework called DistNav (Section \ref{sec: methodology}).
    \item We conduct a comprehensive evaluation using both real world datasets and simulation environments, and benchmark DistNav against a variety of other crowd navigation methods. Results show that DistNav outperforms other methods in both safety and efficiency metrics, and is competitive with \emph{human} safety and efficiency performance (Section \ref{sec: evaluation}).
\end{enumerate}

\section{Related Work} \label{sec: related-work}
Roboticists have been investigating navigation in human environments since the 1990s.  Two landmark studies were the RHINO~\cite{rhino-robot} and MINERVA~\cite{minervapaper} experiments. In~\cite{dutoit-tro} the authors observed that assuming agent independence leads to an uncertainty explosion that makes efficient navigation impossible. In~\cite{trautman-ijrr-2015}, it was shown that bounding uncertainty (such as in~\cite{navintent,dutoit-tro,joseph-dps-over-gps,gpmlras}) cannot prevent freezing robot behavior. Relatedly, \emph{Human intention aware} planning is a popular crowd navigation approach~\cite{crowd-nav-survey}; examples include \cite{hall-1966,mead-proxemic,svenstrup-human,risk-rrt,vaibhav-co-nav}. Although these approaches model human-robot interaction, they ignore human-robot \emph{cooperation}.

Some approaches learn navigation strategies by observing examples, such as through inverse reinforcement learning~\cite{kuderer-ijrr-2016, ziebartcabbie, ziebartppp}  or deep reinforcement learning~\cite{crowd-nav-deep-learning-mit,unfrozen-unlost,social-nav-gan-end-to-end}. Typically, human relationship models are ignored; importantly,~\cite{alahi-crowd-attention} models human-human interaction in a method called ``socially aware reinforcement learning (SARL).''  In~\cite{rgl-mprgl}, agent relational graphs are trained using deep reinforcement learning. In~\cite{social-nce}, a relational graph is paired with ``negative examples'' (e.g., collisions) to enforce safety constraints and social norms. These reinforcement learning methods rely on simulators for training, but as being pointed out in \cite{fraichard2020crowd}, current simulators make unrealistic but critical assumptions that will not hold in real world, which may affect the sim-to-real transfer.  

Coupled prediction and planning approaches explicitly capture the mutual dependencies between human and robot. An important body of work is game theoretic planning\cite{ilqgames, cleac2019algames}, these planners typically assume agent objectives are known, but recent works introduce online estimation of human agents' objectives \cite{schwarting2019social, lecleac2021lucidgames}. Further, \cite{schwarting2019social} uses the social value orientation (SVO) to quantify the degree of agents' selfishness or altruism. Similar idea of modeling human willingness to coordinate can be found in other works. For example, \cite{bam-journal} models mutual adaptation between human and robot in manipulation tasks, and \cite{trautman-icaps-2020} models human flexibility as the covariance matrix of a Gaussian process. In addition to online estimation, \cite{dorsa-auro-2018} introduces an active information gathering algorithm that generates communicative behaviors for autonomous vehicles. However, in these works the planners still over-confidently predict human behavior as a single trajectory, and none of them capture how willingness to coordinate changes \emph{during} interaction. 

Lastly, while the deep learning approaches in~\cite{lstm-crowd-prediction-eth,social-lstm,social-gan,social-attention,drl-pan} focus on \emph{prediction}, they are an important contribution to crowd navigation. In~\cite{pavone-multi-modal-generative}, variational auto encoders capture multimodality. \cite{chai2019multipath} includes human ``intent uncertainty'' and ``control uncertainty'' as part of the prediction, and model them with Gaussian mixture models. But none of these models explicitly measure or account for how flexibility changes during interaction.

\section{Problem Formulation} \label{sec: problem-formulation}

\subsection{Terminology}

Consider there are $n+1$ agents including $n$ pedestrians and one robot in the environment, we start by defining a set of unique indices $\I=\{R, 1, 2, \dots, n\}$ for all agents, where $R$ is the index of the robot and is treated as \emph{zero} when compared with other indices. The state of each agent is in the space $\X\subseteq\R{k}$, for example if we only consider all agents' planar positions, then $\X\subseteq\R{2}$. The trajectory of each agent $f^{(i)} : \R{+}\mapsto\X, i\in\I$ is a set function that maps a set of $T$ time points to the agent state at that moment. In practice, the trajectory $f$ would be evaluated as a vector or 2D matrix, thus dimension of the \emph{trajectory space} is $f\in\mathcal{F} \subseteq \R{k\times T}$. Agent states are measured through a measurement model $z^{(i)}_{t} = h(f^{(i)}(t))$, and we collect measurements of the agents' past states at time steps $\{1, 2, \dots, t\}$ as $z^{(i)}_{1:t} = [z^{(i)}_{1}, z^{(i)}_{2}, \dots, z^{(i)}_{t}]$, note that the observations could be noisy (in this paper we assume additive zero-mean Gaussian noises) and we don't assume $z^i_{1:t}$ to be complete: $z^{i}_{\tau}$ could be missing for some $\tau\in[1,t]$ and $i\in\I$. 

\begin{definition}[Distribution space]
The distribution space $\mathcal{P}$ is a function space, where each element $p(f):\mathcal{F}\mapsto\R{+}_0$ is a probability density function that maps the trajectory space $\mathcal{F}$ to the non-negative real domain, and each element satisfies:
\begin{align}
    \int_{\F} p(f) df = 1 \label{eq: distribution_constraint}
\end{align}
\end{definition}

\begin{definition}[Original preference distribution]
The prior probability for agent $i$'s trajectory conditioned on its measurements $z^{(i)}_{1:t}$ is defined as the agent's original preference distribution $p_i(f)=p(f^{(i)}=f|z^{(i)}_{1:t})\in\mathcal{P}$.
Since no measurement of other agents is used, original preference distribution doesn't include the interaction with other agents. 
\end{definition}

In this paper we use Gaussian processes regression to compute the original preferences, so in the rest of the paper we assume the original preferences are GPs, but our results and algorithm apply to \emph{arbitrary distributions}. We refer the readers to \cite{gpmlras} for more details about GP regression.

Preference distribution represents the agent's intention, which contains information for agent's preferred trajectories (intents) and their willingness to give up the preferred trajectories in order to cooperate with other agents (flexibility). Below we define \emph{intent} and \emph{flexibility} for Gaussian and arbitrary preferences.

\begin{definition}[Intent and flexibility for Gaussian preferences] \label{def: gp_shorthand}
When the agent's preference is a Gaussian process (GP) $p_i(f)=\N(f|\mu_i,\Sigma_i)$, the agent's intent and flexibility are defined as the GP mean $\mu_i$ and covariance $\Sigma_i$, respectively.
\end{definition}

\begin{definition}[Intent and flexibility for arbitrary distributions]
More generally, for any distribution $p(f)$, the intents are defined as the local maximums of $p(f)$ (so there could be multiple intents) and the flexibility is qualitatively measured through the covariance, skew and kurtosis of the distribution.
\end{definition}

Measuring flexibility for arbitrary distribution is tricky, in this paper we only consider qualitative analysis: Covariance represents the ``spread'' of the distribution, a larger covariance indicates a larger feasible action region. Skew measures the symmetry of the preference, for example whether the agent is more willing to go ``left'' or go ``right''. Kurtosis is the ``tailedness'' of the distribution, it can be interpreted as the agent's tolerance for large deviations from the intents. 

At last, we also need to define a function to penalize the likelihood of collision between two trajectories.

\begin{definition}[Collision penalty function] \label{def: collision_checking}
The collision penalty function $\psi(f^{(i)}, f^{(j)}):\F \mapsto \R{+}$ represents the likelihood of collision between two trajectories, and it needs to be symmetric such that $\psi(f^{(i)}, f^{(j)}) = \psi(f^{(j)}, f^{(i)})$.
\end{definition}
As an example, in our implementation we choose the collision penalty function to be:
\begin{align}
    \psi(f^{(i)}, f^{(j)}) = \max_{t} w \cdot \mathcal{N}(f^{(i)}(t) | f^{(j)}(t), \Sigma_{\psi})
\end{align} where $w\in\R{}$ is the penalty weight and $\Sigma_{\psi}\in\R{k\times k}$ controls how close two agents can be.

\subsection{Coupled Prediction and Planning in Distribution Space}

Coupled prediction and planning (also called generative navigation) considers motion planning as a prediction problem; when applied to crowd navigation, the robot couples the prediction of pedestrian trajectories and planning for its own trajectory by optimizing the joint trajectories of all agents simultaneously. The navigation goal and waypoints for the robot can be included as (artificial) observations, so the predicted robot trajectory will pass through them. We refer the readers to~\cite{trautman-ijrr-2015} for more details about incorporating navigation task into prediction. 

We start formulating coupled prediction and planning in distribution space by extending the collision penalty function to distribution space.
\begin{definition}[Expected collision penalty]
The expected collision penalty $c(p_i, p_j):\mathcal{P}\mapsto\R{+}_0$ is defined as the joint expected value of the collision penalty function $\psi(f^{(i)}, f^{(j)})$ with respect to two preference distributions:
\begin{align}
    c(p_i, p_j) = \int_{\F}\int_{\F} \psi(f^{(i)}, f^{(j)}) p_i(f^{(i)}) p_j(f^{(j)}) df^{(i)} df^{(j)} \label{eq: expected_collision_checking}
\end{align}
\end{definition}

\begin{definition}[Joint expected collision penalty]
For $n$ agents, the joint expected collision penalty is defined as:
\begin{align}
    J_c(p_R, p_1, \dots, p_n) & = \sum_{i=R}^{n} \sum_{j=i+1}^{n} c(p_i, p_j) \label{eq: J_c}
\end{align}
\end{definition}

\begin{definition}[Coupled Prediction and Planning in Distribution Space]
Given the original preference distributions for all agents $p_R(f), p_1(f), \dots, p_n(f)$, and the expected collision penalty function $c(p_i, p_j)$, the target is to find optimal preference distributions $(p_R, p_1, \dots, p_n)^*$ that minimize the joint expected collision penalty (\ref{eq: J_c}), and also prevent large deviations from the original preference distributions. Then the optimal joint \emph{trajectories} are selected individually as the maximum of each agent's optimal preference distribution.
\end{definition}

One point worth emphasizing in the above definition is that we do not put strict constraints on the difference between current preference distributions and original preferences, but instead consider it as a ``soft constraint'' while giving the joint expected collision penalty (\ref{eq: J_c}) a higher priority as the optimization objective. This is because the original preference distribution can lead to dangerous conflicts since they don't include other agents' intentions, in which case strictly constraining the deviation from such original preferences limits the agents' cooperation for safer joint actions. As will be discussed in the next section, in our proposed algorithm, we only constrain the preference deviation between two consecutive iterations during the optimization, rather than constraining them with respect to the original preferences;  this relaxation gives more freedom for conflict resolution.

We consider crowd navigation as a \emph{receding horizon} planning problem, therefore solving for optimal preferences distributions and selecting optimal coupled joint trajectories over one time step. When new observations are obtained, new preference distributions are generated and the joint trajectories for coupled planning and prediction are updated.

\section{Variational Analysis for Coupled Prediction and Planning in Distribution Space} \label{sec: methodology}

\begin{figure}
    \centering
	\includegraphics[width=250pt]{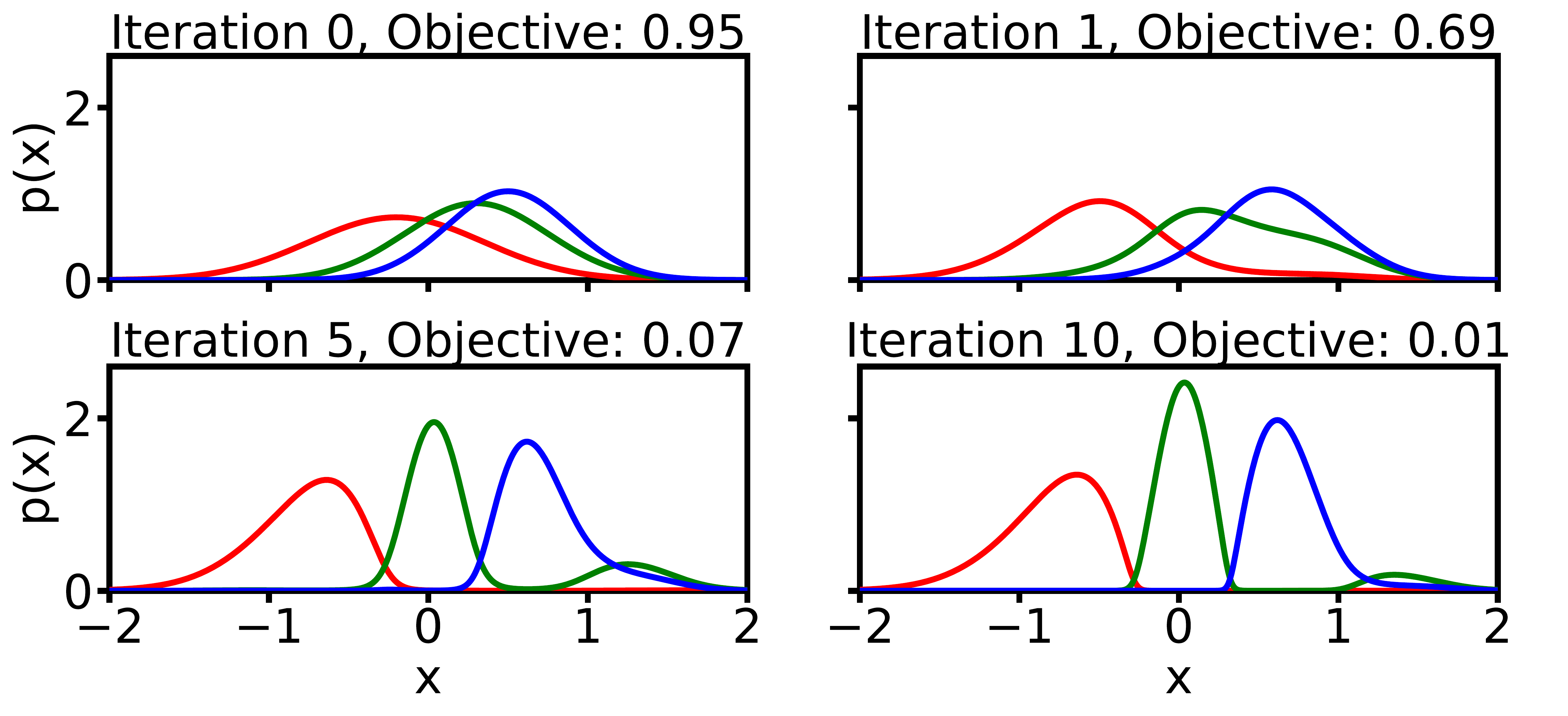}
    \caption{Evolution of three 1D preference distributions in 10 iterations using the sequential variational update. All three distribution are initialized as Gaussian distributions, the collision penalty function is a Gaussian probability density function with small variance. We can clearly see the change of intents and flexibilities (variance, skew and kurtosis) during the iterations.}
	\label{fig: three_agents_1D}
	\vspace{-15pt}
\end{figure}  

\begin{figure*}
    \centering
	\includegraphics[width=\textwidth]{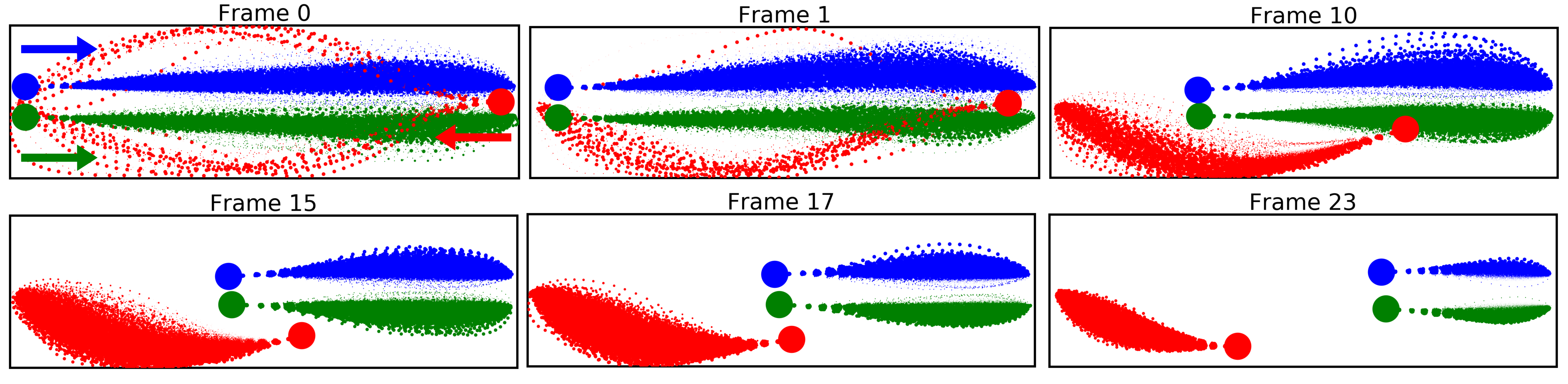}
    \caption{Evolution of three agents' preferences while they are passing each other in a narrow hallway, each preference distribution is approximated by 1000 samples, each sample's size represents its log-scaled weight. Each agent's intent is approximated by the sample trajectory with largest weight, the sample distribution reflects the flexibility information, such as the covariance (spread of the samples), skew (symmetry of the sample weights) and kurtosis (number of outlier samples). \textbf{DistNav captures the change of both \emph{intent} and \emph{flexibility} of each agent during the interaction. This can be clearly seen in the first two frames where the red agent is simultaneously reasoning about going to the left and to the right of the blue and green agents---something impossible to represent with trajectory space coupling.}}
	\label{fig: hallway_samples_1}
	\vspace{-10pt}
\end{figure*}  

The formulation of coupled prediction and planning in distribution space raises several challenges:
\begin{enumerate}
    \item The objective (\ref{eq: J_c}) is a functional, which is defined in the infinite-dimensional function space, so traditional optimization methods in vector space may not apply.
    \item The objective is subject to subsidiary constraints (\ref{eq: distribution_constraint}).
    \item The objective contains a combinatorial dependencies between the variables, in other words, all agents update their preferences in response to how others update. This structure makes the objective challenging to optimize. 
\end{enumerate}

To address the above challenges, in this paper we combine the following techniques:
\begin{enumerate}
    \item For the inter-agent dependencies, a strategy commonly used in multi-agent game solvers is the iterative best response (IBR) scheme \cite{wang2019game}\cite{spica2020real}. In one iteration, each agent's strategy (in our case the preference distribution) is locally optimized by solving a subproblem; in the subproblem other agents' strategies are fixed.
    \item The subproblem for each agent is constructed sequentially. Each agent updated their preference distribution in response to those who have already updated, and assuming those who haven't updated would follow. 
    \item Each subproblem can be solved analytically using Lagrange multipliers \cite{gelfand2000calculus} as an isoperimetric problem with subsidiary conditions.
\end{enumerate}

\subsection{Sequential Iterative Variational Update}

In $k$-th iteration, each agent updates its own preference sequentially by solving a subproblem. For agent $i$, the corresponding subproblem is:
\begin{align}
    p^{(k+1)}_i(f) & = \argmin_{p} \left\{ D_{KL}(p\Vert p^{(k)}_i) + \bar{c}_i^{(k)}(p) \right\} \label{eq: subproblem_objective}
\end{align} where
\begin{align}
    \bar{c}_i^{(k)}(p) & = \sum_{j=R}^{i-1} c(p, p_j^{(k+1)}) + \sum_{j=i+1}^{n} c(p, p_j^{(k)}) \label{eq: sequential_collision} \\
     & = \int_{\F} p(f) \bar{\gamma}_i^{(k)}(f) df \\
    \bar{\gamma}_i^{(k)}(f) & = \sum_{j=R}^{i-1} \int_{\F} \psi(f, f^{(j)}) p_j^{(k+1)}(f^{(j)}) df^{(j)} \\ 
     & + \sum_{j=i+1}^{n} \int_{\F} \psi(f, f^{(j)}) p_j^{(k)}(f^{(j)}) df^{(j)}
\end{align}

The first term in the subproblem objective (\ref{eq: subproblem_objective}) is the Kullback-Leibler divergence between the preferences in two iterations, which controls the change between preferences in two consecutive iterations, and therefore serves as a ``soft constraint'' on the deviation of current preference $p^{(k)}_i(f)$ from the original preference $p^{(0)}_i(f)$. The second term $\bar{c}_i^{(k)}(p)$ measures the summation of the expected collision penalties between agent $i$ and rest of the agents, assuming their preferences are fixed, under this assumption optimizing the second term equals optimizing the overall joint expected collision penalty (\ref{eq: J_c}). Note that the construction of $\bar{c}_i^{(k)}(p)$ needs to follow a sequential order as shown in (\ref{eq: sequential_collision}): all agents with indices smaller than $i$ would update their preferences for next iteration ahead of agent $i$, and their updated preferences are fixed in agent $i$'s subproblem. For all agents with indices larger than $i$, their preferences are fixed as before being updated for next iteration.

\begin{theorem} \label{theorem: subproblem_solution}
The global minimum for the subproblem (\ref{eq: subproblem_objective}) is:
\begin{align}
    p^{(k+1)}_i(f) = \frac{p^{(k)}_i(f)\exp(-\bar{\gamma}_i^{(k)}(f))}{\int_{\F} p^{(k)}_i(f)\exp(-\bar{\gamma}_i^{(k)}(f)) df} \label{eq: subproblem_solution}
\end{align}
\end{theorem}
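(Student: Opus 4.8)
The plan is to recognize the subproblem \eqref{eq: subproblem_objective} as a convex isoperimetric problem with the single subsidiary condition \eqref{eq: distribution_constraint}, solve it by Lagrange multipliers in the calculus of variations, and then upgrade the resulting stationary point to the unique global minimum. First I would expand the objective explicitly: using $\bar{c}_i^{(k)}(p)=\int_{\F}p(f)\bar\gamma_i^{(k)}(f)\,df$, the functional to minimize over $p\in\mathcal{P}$ is
\begin{align}
    \mathcal{J}[p] = \int_{\F} p(f)\log\frac{p(f)}{p^{(k)}_i(f)}\,df + \int_{\F} p(f)\,\bar\gamma_i^{(k)}(f)\,df .
\end{align}
Adjoining the constraint \eqref{eq: distribution_constraint} with a multiplier $\lambda$, the integrand $\Phi(f,p)=p\log(p/p^{(k)}_i)+p\,\bar\gamma_i^{(k)}+\lambda p$ contains no derivatives of $p$, so the Euler--Lagrange equation collapses to $\partial\Phi/\partial p=0$, i.e.\ $\log\big(p(f)/p^{(k)}_i(f)\big)+1+\bar\gamma_i^{(k)}(f)+\lambda=0$. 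Solving pointwise gives $p(f)=p^{(k)}_i(f)\exp(-\bar\gamma_i^{(k)}(f))\,e^{-1-\lambda}$, and enforcing \eqref{eq: distribution_constraint} fixes $e^{-1-\lambda}=1/Z^{(k)}_i$ with $Z^{(k)}_i=\int_{\F}p^{(k)}_i(f)\exp(-\bar\gamma_i^{(k)}(f))\,df$, which is exactly \eqref{eq: subproblem_solution}.

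Two verifications complete the argument. For well-posedness of the normalizer: since $\psi\ge 0$ (Definition~\ref{def: collision_checking}) and every $p_j\ge 0$, the aggregated penalty satisfies $\bar\gamma_i^{(k)}(f)\ge 0$, hence $\exp(-\bar\gamma_i^{(k)}(f))\le 1$ and $0<Z^{(k)}_i\le 1$; therefore the candidate in \eqref{eq: subproblem_solution} is a genuine, strictly positive probability density, and the implicit nonnegativity constraint is met automatically (so no inequality multipliers are needed). For global optimality, rather than invoking a second-variation argument I would rewrite $\mathcal{J}$ exactly: substituting $\log p^{(k)}_i(f)=\log\big(p^{(k+1)}_i(f)\,Z^{(k)}_i\big)+\bar\gamma_i^{(k)}(f)$ (read off from \eqref{eq: subproblem_solution}) into $\mathcal{J}[p]$ cancels the $\bar\gamma_i^{(k)}$ terms and yields, for every admissible $p$,
\begin{align}
    \mathcal{J}[p] = D_{KL}\big(p \,\Vert\, p^{(k+1)}_i\big) - \log Z^{(k)}_i .
\end{align}
Because $D_{KL}(\cdot\,\Vert\,\cdot)\ge 0$ with equality iff the two arguments coincide (Gibbs' inequality), $\mathcal{J}$ attains its minimum value $-\log Z^{(k)}_i$ precisely at $p=p^{(k+1)}_i$, which identifies the stationary point above as the unique global minimizer.

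The main obstacle is making the passage from ``critical point of the Lagrangian'' to ``global minimum over the infinite-dimensional simplex $\mathcal{P}$'' rigorous: the Euler--Lagrange computation alone only gives a stationary point, and one must separately argue admissibility of the variations and the correct handling of the density constraints. The completion-of-the-square identity sidesteps this cleanly, but it relies on the integrability bound in the previous paragraph so that $Z^{(k)}_i$ and the relevant integrals are finite; a minor caveat is that $D_{KL}(p\,\Vert\,p^{(k+1)}_i)$ is allowed to be $+\infty$, which is harmless under minimization. As an independent sanity check I would also note that $\mathcal{J}$ is convex on the convex set $\mathcal{P}$ (the KL term is convex in $p$ and the collision term $\bar c_i^{(k)}$ is linear in $p$), so the stationary point is necessarily the global optimum.
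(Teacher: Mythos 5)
Your derivation of the candidate minimizer is essentially the paper's: both treat \eqref{eq: subproblem_objective} as an isoperimetric problem, adjoin the normalization constraint \eqref{eq: distribution_constraint} with a multiplier, note that the Euler--Lagrange condition reduces to a pointwise equation since no derivatives of $p$ appear, and fix $\lambda$ by normalization. Where you genuinely diverge is in the passage from stationary point to global minimum. The paper disposes of this step with the remark that the objective is ``unbounded from above,'' which by itself does not establish that a critical point is a global minimizer; your completion-of-the-KL identity
\begin{align*}
    \mathcal{J}[p] \;=\; D_{KL}\bigl(p \,\Vert\, p^{(k+1)}_i\bigr) \;-\; \log Z^{(k)}_i
\end{*align*}
is both correct (substituting $\log p^{(k)}_i = \log p^{(k+1)}_i + \log Z^{(k)}_i + \bar\gamma_i^{(k)}$ does cancel the penalty term) and strictly stronger: via Gibbs' inequality it gives global optimality \emph{and} uniqueness, bypasses any second-variation or convexity machinery, and even identifies the optimal value $-\log Z^{(k)}_i \ge 0$. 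Your auxiliary observations---that $\bar\gamma_i^{(k)} \ge 0$ makes $0 < Z^{(k)}_i \le 1$ so the normalizer is well defined and the nonnegativity constraint on $p$ is inactive, and that convexity of KL plus linearity of $\bar c_i^{(k)}$ gives an independent route to globality---are also absent from the paper and close the one real gap in its argument. In short: same mechanics for producing \eqref{eq: subproblem_solution}, but your justification that it is the global minimum is the one that actually holds up.

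(One LaTeX-unrelated caveat: the identity presumes $Z^{(k)}_i$ finite and positive, which you correctly secure from $0 \le \bar\gamma_i^{(k)} < \infty$; and the convention $D_{KL}(p\Vert p^{(k+1)}_i)=+\infty$ when $p$ is not absolutely continuous with respect to $p^{(k)}_i$ matches the original objective being $+\infty$ there, so nothing is lost.)
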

\begin{proof}
See appendix.
\end{proof}

\begin{theorem} \label{theorem: sufficient_decrease}
After all $n$ agents' preferences have been updated sequentially in one iteration following (\ref{eq: subproblem_solution}), the inequality below holds, if $p_i^{(k+1)}(f)\neq p_i^{(k)}(f)$ for some $i\in\I$:
\begin{align}
    & J_c(p_R^{(k+1)}, p_1^{(k+1)}, \dots, p_n^{(k+1)}) \\
    & \leq J_c(p_R^{(k)}, p_1^{(k)}, \dots, p_n^{(k)}) - \xi \text{ , } \xi > 0
\end{align}
\end{theorem}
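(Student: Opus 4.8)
The plan is to exploit the fact that, by construction, the per-agent subproblem objective differs from the global objective $J_c$ only by an additive constant (the terms not involving that agent) plus the Kullback-Leibler ``proximal'' term, so that each sequential update is a genuine descent step on $J_c$ itself. First I would introduce the intermediate joint configurations that arise during a single sweep: let $P^{(k,i)} = (p_R^{(k+1)},\dots,p_{i-1}^{(k+1)},p_i^{(k)},p_{i+1}^{(k)},\dots,p_n^{(k)})$ denote the state right before agent $i$ is updated, so that $P^{(k,R)}$ is the state at the start of iteration $k$ and the state obtained after updating agent $n$ is $(p_R^{(k+1)},\dots,p_n^{(k+1)})$.

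Next I would record the key decomposition of $J_c$: by the symmetry of $\psi$ (Definition~\ref{def: collision_checking}), and hence of $c$, the terms of $J_c(P^{(k,i)})$ that contain the $i$-th slot sum to exactly $\bar{c}_i^{(k)}(p_i^{(k)})$ as defined in (\ref{eq: sequential_collision}), while all remaining terms do not involve agent $i$ and are therefore unchanged when only the $i$-th slot is modified. Consequently
\begin{align}
J_c\big(P^{(k,i+1)}\big) - J_c\big(P^{(k,i)}\big) = \bar{c}_i^{(k)}\big(p_i^{(k+1)}\big) - \bar{c}_i^{(k)}\big(p_i^{(k)}\big). \nonumber
\end{align}
Now I would invoke Theorem~\ref{theorem: subproblem_solution}: $p_i^{(k+1)}$ is the global minimizer of $D_{KL}(p\Vert p_i^{(k)}) + \bar{c}_i^{(k)}(p)$, and $p_i^{(k)}$ is a feasible competitor whose KL term vanishes, which gives $D_{KL}(p_i^{(k+1)}\Vert p_i^{(k)}) + \bar{c}_i^{(k)}(p_i^{(k+1)}) \le \bar{c}_i^{(k)}(p_i^{(k)})$; hence the one-step change in $J_c$ is at most $-D_{KL}(p_i^{(k+1)}\Vert p_i^{(k)}) \le 0$.

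Finally I would telescope over $i = R,1,\dots,n$ to obtain $J_c(p_R^{(k+1)},\dots,p_n^{(k+1)}) - J_c(p_R^{(k)},\dots,p_n^{(k)}) \le -\sum_{i\in\I} D_{KL}(p_i^{(k+1)}\Vert p_i^{(k)})$, and set $\xi := \sum_{i\in\I} D_{KL}(p_i^{(k+1)}\Vert p_i^{(k)})$. By Gibbs' inequality each summand is nonnegative and is strictly positive exactly when $p_i^{(k+1)}\neq p_i^{(k)}$, so the hypothesis that some agent's preference actually changes yields $\xi > 0$. The main obstacle is not any single estimate but getting the sequential bookkeeping exactly right: one must verify carefully that $\bar{c}_i^{(k)}$ is the sum of all and only the $J_c$-terms touching slot $i$ in the intermediate state $P^{(k,i)}$, which is where both the sequential ordering (smaller indices already updated, larger indices not yet) and the symmetry of $c$ are essential. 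A secondary, more technical point is ensuring the functionals are finite and the subproblem minimizer is attained so the optimality comparison with $p_i^{(k)}$ is legitimate; given the closed form of Theorem~\ref{theorem: subproblem_solution} together with the nonnegativity and boundedness of $\psi$, this is routine.
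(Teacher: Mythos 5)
Your proposal is correct and follows essentially the same route as the paper: the key step in both is that, by global optimality of the subproblem solution against the competitor $p_i^{(k)}$, one gets $\bar{c}_i^{(k)}(p_i^{(k+1)}) \leq \bar{c}_i^{(k)}(p_i^{(k)}) - D_{KL}(p_i^{(k+1)}\Vert p_i^{(k)})$, and the sequential ordering plus symmetry of $c$ makes these per-agent decreases aggregate into a decrease of $J_c$ with $\xi = \sum_i D_{KL}(p_i^{(k+1)}\Vert p_i^{(k)}) > 0$. Your telescoping over intermediate joint states is just a cleaner bookkeeping of the same cancellation the paper performs by summing both sides over $i$ and matching cross terms, with the small bonus of making explicit that each individual agent update is itself a descent step on $J_c$.
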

\begin{proof}
See appendix.
\end{proof}

Theorem \ref{theorem: sufficient_decrease} shows that in each iteration, the joint expected collision penalty (\ref{eq: J_c}) is guaranteed to be sufficiently decreased by updating the preference for each agent based on (\ref{eq: subproblem_solution}). Since (\ref{eq: J_c}) is bounded below at zero, with the decrease of its value, the subproblem (\ref{eq: subproblem_objective}) comes closer to optimizing the KL-divergence between the preference at two iterations; therefore the new preference at the next iteration comes closer to the one in last iteration. While we leave the question of whether the iterative update (\ref{eq: subproblem_solution}) could lead to guaranteed or efficient convergence to future work, in practice we don't actually look for the minimum of (\ref{eq: J_c}), which could be infeasible for navigation (e.g., a set of Dirac delta functions infinitely far from each other), but instead look for sufficiently small (\ref{eq: J_c}) that is safe enough. To terminate the iteration, one can threshold either the objective (\ref{eq: J_c}) or the KL-divergence between two iterations. Figure \ref{fig: three_agents_1D} shows an example of updating three 1D distributions using the update rule.

\subsection{DistNav: Sampling-Based Crowd Navigation Based On Sequential Iterative Variational Analysis}

Unfortunately, computing the preference updates (\ref{eq: subproblem_solution}) analytically in high dimensional space is intractable due to the integrals in (\ref{eq: subproblem_solution}), which is also known as the \emph{curse of dimensionality}. Therefore, we propose a sampling-based motion planner based on (\ref{eq: subproblem_solution}) to approximate the evolution of preferences and select the reference trajectory for robot navigation, the key idea here is to approximate the integrals through Monte Carlo integration.

We start by generating $m$ samples from the original preference distribution of each agent, here we denote the samples for agent $i$ as
\begin{align}
    [\f_{i}]^{(k)} & \sim p^{(k)}_i(f) \\
    [\f_{i}]^{(k)} & = \{\f^{(k)}_{i, 1}, \f^{(k)}_{i, 2}, \dots, \f^{(k)}_{i, m}\}
\end{align} where each sample $\f^{(k)}_{i, j}$ indicates the $j$-th sample of agent $i$ at $k$-th iteration, and it consists of two parts, the trajectory and the weight, we will not update the sample trajectory but only the weight:
\begin{align}
    \f_{i, j}^{(k)} & = (f_{i,j}, w_{i,j}^{(k)}), \quad f_{i,j} \in \F, \quad w_{i,j}^{(k)} \in \R{}
\end{align}
Before the iterations begin, the weight of each sample will be initialized as $w_{i,j}^{(0)}=1$. In the iteration, the weight of each sample will first be updated based on (\ref{eq: subproblem_solution}), where the integral of $\bar{\gamma}_i^{(k)}(f_{i,y})$ is approximated through Monte Carlo integration as:
\begin{align}
    \bar{\gamma}_i^{(k)}(f_{i,y}) & \approx \sum_{j=R}^{i-1} \left( \frac{1}{m} \sum_{z=1}^{m} \psi(f_{i,y}, f_{j,z}) \cdot w_{j,z}^{(k+1)} \right) \nonumber \\ 
     & + \sum_{j=i+1}^{n} \left( \frac{1}{m} \sum_{z=1}^{m} \psi(f_{i,y}, f_{j,z}) \cdot w_{j,z}^{(k)} \right) \label{eq: sample_weight_computation}
\end{align} And the weight of the sample is then updated as:
\begin{align}
    w_{i,j}^{(k+1)} = w_{i,j}^{(k)} \exp\left(-\bar{\gamma}_i^{(k)}(f_{i,y})\right)
\end{align} After all agent $i$'s sample weights have been updated, we normalize the weights such that the average weight remains $1$, this step is the approximation to the denominator in (\ref{eq: subproblem_solution}). For each sample $\f_{i, j}^{(k)} = (f_{i,j}, w_{i,j}^{(k)})$, the updated preference for the trajectory is approximated as:
\begin{align}
p_i^{(k)}(f_{i,j}) \approx w_{i,j}^{(k)} p_i^{(0)}(f_{i,j})
\end{align} After the iteration terminates, the optimal trajectory of each agent is selected as the sample with largest approximated preference. Pseudocode of the whole algorithm can be found in appendix. Figure \ref{fig: hallway_samples_1} shows how DistNav uses samples to approximate the evolution of three agents' preferences, where they pass each other in a narrow hallway.

\section{Evaluation} \label{sec: evaluation}

\subsection{Rationale for Both Simulated and Real World Dataset Evaluation}
For evaluation, we considered the crowd datasets ETH~\cite{walkalone} and UCY~\cite{ucy} and crowd simulators based on the the social forces model (SFM,~\cite{helbing1}; e.g., PEDSIM~\cite{pedsim}) and optimal reciprocal collision avoidance agents~\cite{rvo, rvoicra}, such as implemented in~\cite{alahi-crowd-attention}.  Ultimately, both evaluation methodologies have complementary strengths.  For example, simulated agents can be overly permissive with aggressive robots: our Monte Carlo IGP produced zero collisions in PEDSIM while quickly navigating to the goal, whereas in our ETH study it was unsafe in 34/181 of runs (row 9, Table~\ref{table:pt}) and exhibited freezing robot behavior. However, simulation provides information about how the algorithm leverages agent cooperation.  Alternatively, the humans in pre-recorded datasets are non-responsive, and so the robot cannot leverage cooperation. However, pre-recorded datasets provide \emph{human} benchmarks on safety and efficiency performance, which can be useful in assessing an algorithm's real world viability.   Thus, we used both validation techniques to gain insight about a) the algorithm's ability to leverage cooperation and b) the algorithm's safety and efficiency performance compared to human safety and efficiency performance.

However, \emph{how} to evaluate a navigation algorithm against ETH and UCY is non-trivial.  For instance, many trajectories in these datasets have no interaction; thus, the majority of the runs in these datasets are not sufficiently challenging for a navigation study.  Further, pre-recorded crowd datasets do not immediately suggest a \emph{navigation} testing protocol (ETH and UCY are typically used to benchmark \emph{prediction} algorithms, where testing protocol is straightforward).  However, a subsample of the ETH dataset (Figure~\ref{fig:eth-vis}; 100 frames, 150 pedestrians) collected for testing a deep network in~\cite{trajectron} has many interactions and substantial congestion; indeed, every pedestrian interacts at least once and most pedestrians interact many times during the 100 frame sequence.  To derive a \emph{navigation} test protocol, we expand on an idea from the experimental section of~\cite{trautmaniros}: 1) identify a pedestrian, 2) extract the start and end position of that pedestrian, 3) remove that pedestrian from the observation dataset of the navigation algorithm, and 4) provide the start and end positions of the removed pedestrian and the current and previous positions of the remaining agents to the navigation algorithm. Thus we assure that at least one path through the crowd exists (the one taken by the removed pedestrian). Additionally, by providing the navigation algorithm with start and end points that are joined by a path \emph{through} the crowd, the navigation algorithm naturally confronts high crowd densities (the human agent confronts an average density of 0.22 $people/m^2$ within a $3m$ radius circle).  Finally, this testing protocol provides us with a powerful performance benchmark: \emph{actual human performance on the exact same situation} as encountered by the navigation algorithm (first row, Table~\ref{table:pt}). To determine our safety threshold, we computed the shortest distance that any two humans in the ETH dataset ever came to each other; that distance was 0.21m.  Additionally, two humans only came within 0.3m of each other 3 times. Thus, if the robot is within 0.21m of any human, we consider that a \emph{collision}, while distances within 0.3m are considered \emph{unsafe} or \emph{uncomfortable}.

Furthermore, we partition this ETH dataset into what we call a ``partial'' trajectory dataset.  In the partial trajectory dataset, we consider all (approximately) 10 meter long agent runs.  For example, if agent 1's full trajectory was 30 meters long, we would have 3 partial trajectories.  Partial trajectory experiments provide focused examination of an algorithm's ability to navigate through congestion in a safe and efficient manner. We identified 293 partial trajectories and tested 181 of them (the rest are discarded for calibration reason). 

\begin{figure}[t!]
\vspace{-0pt} 
\centering 
\includegraphics[width=0.3\textwidth]{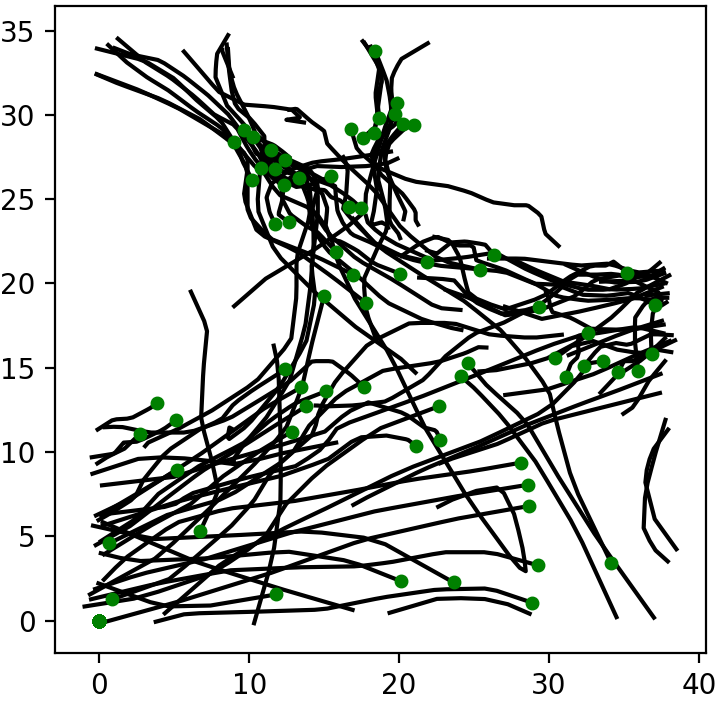}
\vspace{-0pt} 
\caption{First frame of the ETH data evaluated.  Pedestrian current position in green; next 40 time steps plotted as black curves.}
\label{fig:eth-vis} 
\vspace{-15pt}
\end{figure}

\begin{figure*}[t!]
\centering 
\includegraphics[width=\textwidth-20pt]{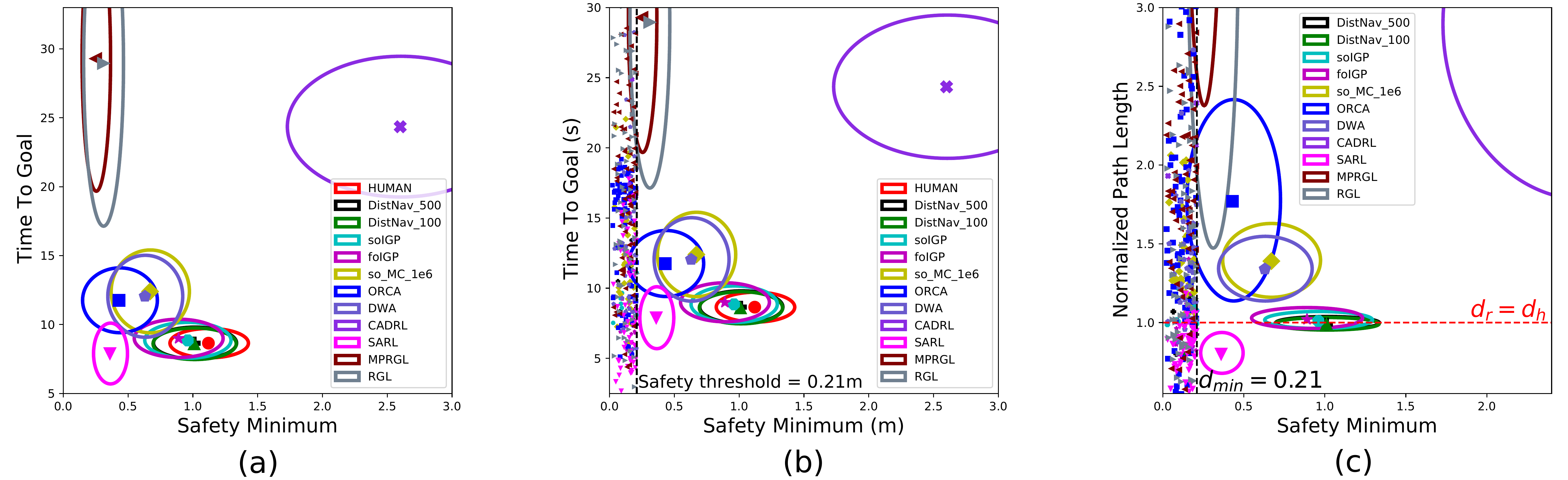}
\label{subfig:c}
\vspace{-0pt}
\caption{\textbf{Partial trajectory statistics}.  Total number of runs is 181; $x$ axes of a) and b) in meters.  All figures plot distance to nearest pedestrian  on $x$-axis; (a) plots means $\pm$ 1 standard deviation of algorithm and the human; (b) appends plot (a) with the $d_{min}$ threshold, the closest distance two humans passed in the dataset.  Inspection of the region left of $d_{min}$ shows numerous instances of DWA, so\_MC\_1e6, and SARL; (c) normalizes algorithm path length with human path length. E.g., values below $d_r=d_h$ mean that the robot moved to the goal more directly than the human.}
\label{fig:pt} 
\vspace{-0pt}
\end{figure*}
 
\subsection{Rationale for Test Algorithms}
We collected safety and path length data on humans, DistNav, ``first order'' interacting Gaussian processes (foIGP,~\cite{trautman-icaps-2020}), ``second order'' IGP (soIGP, the extension of foIGP that considers both robot-agent and agent-agent interactions), soIGP using Monte Carlo optimization with $10^6$ samples (so\_MC\_1e6, implementation details can be found in~\cite{trautman-icaps-2020}), the ``dynamic window approach'' (DWA,~\cite{dynamic-window}), and ORCA~\cite{rvo}.  For our deep reinforcement learning baselines, we tested against ``collision avoidance with deep reinforcement learning'' (CADRL,~\cite{crowd-nav-deep-learning-mit}),  ``socially aware reinforcement learning'' (SARL,~\cite{alahi-crowd-attention}), ``relational graph learning'' and ``model predictive relational graph learning'' (RGL, MP-RGL,~\cite{rgl-mprgl}).  

Each algorithm was chosen to explore a certain aspect of the performance space.  We collected data on humans to serve as an upper bound on performance.  We tested soIGP and foIGP as state of the art trajectory space approaches.  We tested DWA and ORCA because both are widely deployed; in particular, DWA is the default navigation algorithm in ROS (see ROS's \href{http://wiki.ros.org/base_local_planner}{base local planner }).  We tested against the 4 highest performing deep reinforcement learning variants to understand how model based and learning based algorithms fare against each other.  

Finally, we trained SARL in 7 different environments using the toolbox implemented in \cite{alahi-crowd-attention} \footnote{The toolbox is available at \url{https://github.com/vita-epfl/CrowdNav}}.  We trained in a 3m by 10m corridor (which mimics the ETH test conditions) with 15 and 5 people (0.5 and 0.16 people$/m^2$ densities), with mostly cross human cross traffic. The high density environment produced freezing robot behavior (freezing behavior $= 71\%$, $\max(d_r/d_h) = 23.8$, where $d_r$ and $d_h$ are the path lengths of the robot and human, respectively), while the low density training produced a policy that was unsafe (Collisions $= 34\%$).  We thus attempted training in the high density corridor, but with random start and goal positions of the people; this again resulted in freezing robot behavior at test  (freezing behavior $= 18\%$, $\max(d_r/d_h) = 14.7$).  We also trained in a 4m radius circular environment with 10 people, so the density was $\approx 0.2$ people$/m^2$ (the average density in the ETH data was $\approx 0.2$ people$/m^2$).  This policy also showed freezing robot behavior (freezing behavior $= 10\%$, $\max(d_r/d_h) = 4.32$).  Additionally, we trained the other DRL variants (CADRL, RGL, and MP-RGL) in both ORCA and SFM training environments, with 5, 7, 14, 21, and 28 people in a 4m radius circular environment.  Ultimately, the training regimen detailed in~\cite{alahi-crowd-attention}---a 4m radius circle with 5 agents---produced the best performing policy for all DRL variants.  We used these top performing policies for testing.

\subsection{Safety and Efficiency Evaluation on ETH}

\begin{table}[h!]
\centering
\begin{tabular}{l|llll}
\hline
                   		&Discomfort  &Collisions & \begin{tabular}[c]{@{}l@{}}Freezing\\ Behavior\end{tabular} &$\max(d_{r}/d_{h})$ \\ \hline
\textbf{Human}              	&\textbf{1.7\%}          & \textbf{0}             & \textbf{0\%} & \textbf{1} \\
\textbf{DistNav\_500}$^\ddag$\tnote{1}&\textbf{6.0\%}              & \textbf{3.0\%}      & \textbf{0\%} &  \textbf{1.18}  \\
\textbf{DistNav\_100}$^\ddag$\tnote{1}&\textbf{3.0\%}           & \textbf{1.0\%}      & \textbf{0\%}  & \textbf{1.18}       \\
soIGP              		&13.3\%        & 5\%         & 1\%  & 1.6  \\
foIGP              		&16.7\%        & 10.5\%    & 3\%  & 1.8 \\
so\_MC\_1e6$^\dagger$&30\%         & 18.8\%    & 51\%  & 5.3 \\
ORCA               		    &63\%       & 48.6\%    & 58\%  & 9.2   \\
DWA                		 &35\%          & 23.8\%    & 48\% & 4.1       \\
CADRL$^*$\tnote{2} 	 &12\%          & 6.6\%       & 80\%  & 14.7 \\
SARL$^*$\tnote{2}  	&50\%           & 31.5\%    & 0.5\% & 3.3 \\
RGL$^*$\tnote{2} 	&67.4\%        & 48\%       & 73\% & 28.1 \\
MP-RGL$^*$\tnote{2}	&62\%           & 39\%       & 88\% & 22.5 
\end{tabular}
\begin{tablenotes}
\item[1] $^\ddag$ DistNav\_500 and DistNav\_100 use 500 and 100 samples for each agent, respectively. The differences in the metrics between the two are partially from the sampling nature of the algorithm. A more comprehensive future evaluation would take multiple trials to eliminate such effects. 
\item[2] $^*$ SARL, CADRL, RGL, and MP-RGL were trained in 7 different environments; we report the best performing policy.
\end{tablenotes}
\caption{\textbf{ETH partial trajectory metrics}. ``Discomfort'' and ``Collisions'' are the percent of runs such that safety minimum distance $s<0.3m, 0.21m$; ``Freezing Behavior'' is the percent of robot path lengths 1.25 times longer than the corresponding human path length; $\max(d_{r}/d_{h})$ is the maximal ratio between the path lengths of the robot and human, it measures how \emph{inefficient} the algorithm is compared with human.}
\label{table:pt}
\vspace{-10pt}
\end{table}

\begin{figure*}
    \centering
    \includegraphics[width=\textwidth-30pt]{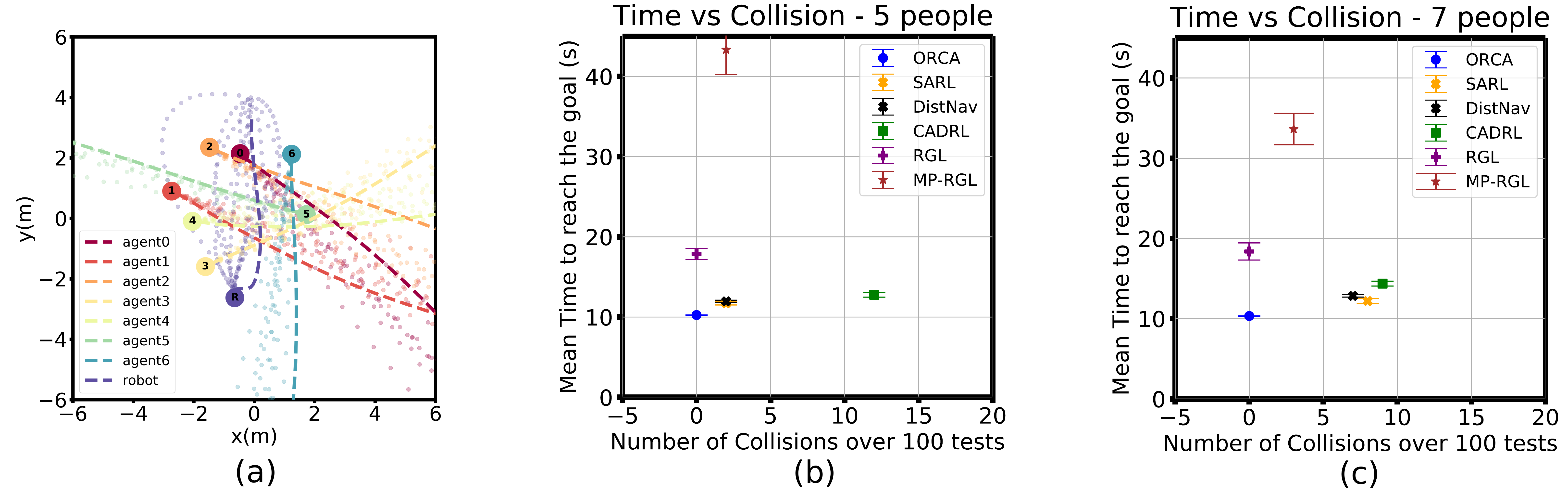}
    \caption{\textbf{ORCA evaluation statistics. }(a) shows one frame of the 7 agents test in ORCA simulation, the dashed lines are the robot's planned trajectory and optimal prediction for pedestrian trajectories, the colored dots are parts of the samples used to approximate preference distributions. (b) and (c) plot each algorithm's number of collisions vs. mean time to reach the goal over 100 tests with 5 and 7 pedestrians, respectively.}
    \label{fig: orca-sim}
    \vspace{-15pt}
\end{figure*}

The results of 181 partial trajectory runs are reported in Table \ref{table:pt} and Figure \ref{fig:pt} (a more comprehensive evaluation table can be found in the appendix). For safety and efficiency, only DistNav and soIGP are competitive with human performance, with DistNav outperforming soIGP. We tested DistNav with 100 samples per agent, in which case the algorithm can run in real time (average replanning time 0.23s). We also tested with 500 samples per agent, which resulted in longer computation time but no significant improvement in performance. In practice we think 100 samples per agent is a good balance between performance and computation efficiency.

Additionally, DWA and ORCA both exhibit freezing robot behavior (large value in ``freezing behavior'' column) and high rates of collision (large values in ``collision'' column). We note that all four DRL variants (last four rows of Table \ref{table:pt}) display extremely long paths (CADRL, RGL, MP-RGL), high rates of collision (SARL, RGL, MP-RGL) or both (RGL, MP-RGL).  Given the wide variety of testing protocols, these results indicate potential limitations of purely simulation-based training.

We emphasize that although ETH provides useful information, it does not  \emph{validate} that an algorithm will perform well in the real world (this requires real world experiments). Nevertheless, we believe that the ETH benchmark can provide evidence of \emph{invalidation}; that is, poor performance on ETH possibly indicates that an algorithm is not suitable for real world deployment.

\subsection{Safety and Efficiency Evaluation in Simulation}

We complement our ETH study with two simulation studies: one with 5 ORCA agents (the standard testing environment for DRL studies~\cite{alahi-crowd-attention, crowd-nav-deep-learning-mit, rgl-mprgl}) and one with 7 agents.  Our ORCA simulators are closely related to those developed in~\cite{alahi-crowd-attention}, with a few important differences:
\begin{itemize}
    \item The robot is visible in both the 5 and 7 agent test because we wish to understand how our algorithm performs in the presence of responsive agents.
    \item Once an ORCA agent reaches a goal, a new goal is provided to the agent.  In this way, the agents circulate in the work space and the crowd density remains consistent throughout the run. The simulator in~\cite{alahi-crowd-attention} provides a single goal to each agent; the agent stops once it arrives.  Thus, our simulator has a higher average crowd density than the simulator in~\cite{alahi-crowd-attention}.
\end{itemize}

We point out that the ORCA \emph{robot} outperforms all the algorithms in both the 5 and 7 person simulation; this is to be expected, since a group of ORCA agents have guarantees on collision performance and locally optimal efficiency.  In short, an ORCA robot is perfectly tuned to an ORCA simulation.  However, as seen in Table~\ref{table:pt}, ORCA is unsuitable for deployment in scenarios with agents not obeying the ORCA protocol.

For the five person simulation, we see that only ORCA and RGL outperform DistNav in terms of safety; however, RGL exhibits substantially longer mean time to goal, indicating freezing robot like behavior (e.g., RGL often chooses to go around the crowd).  SARL shows nearly identical performance to DistNav, but its critical to recall that SARL (and the other DRL variants) were specifically trained in a 5 person ORCA simulator.  Thus, all the DRL variants have a large advantage over DistNav: they have been precisely tuned to this simulation, whereas DistNav has not.  In combination with the ETH results in Figure~\ref{fig:pt} and Table~\ref{table:pt}, DistNav displays substantially stronger performance, both in terms of safety and efficiency, and, more importantly, in the ability to generalize to novel scenarios.  The 7 person case shows nearly identical qualitative results (although the exact number of collisions or time to goal changes slightly, the ordering of the algorithms remains the same.)

Finally, we attempted tests with the number of ORCA agents higher than 7, but this led to hard-to-interpret results because of simulator failures.  For example, at higher densities, agents can be so close together that collisions are often \emph{caused} by the ORCA agents themselves (e.g., an ORCA agent runs into the robot).  While testing in simulation at higher densities is important, fixing the simulator is out of scope of this paper.

\section{Conclusion} \label{sec:conclusion}

We studied the crowd navigation problem by modeling both human and robotic actions as probability density functions (called \emph{preference distributions}). This formulation, together with an optimization algorithm, captures the evolution of agents' preferences in the presence of interaction, something not modeled using trajectory space coupling.  Further, we designed a sampling-based crowd navigation method, called DistNav, and benchmarked against a variety of methods in both a real world dataset and in simulation. In both the dataset and simulation evaluation, Distnav outperformed all other algorithms and was competitive with human safety and efficiency performance. 

\section*{Acknowledgments}

This material is supported by the NSF Grant CNS 1837515. Any opinions, findings and conclusions or recommendations expressed in this material are those of the authors and
do not necessarily reflect the views of the aforementioned institutions.


\bibliographystyle{plainnat}
\bibliography{bibliography}

\begin{appendix} 
\section{sec: appendix}

\subsection{Evaluation on ETH dataset with Supplementary Metrics}
See Table \ref{table:pt_full}.


\begin{table*}[h!]
\centering
\begin{tabular*}{\textwidth}{l | @{\extracolsep{\fill}} lllllll}
\hline
                   		&Discomfort  &Collisions & \begin{tabular}[c]{@{}l@{}}Freezing\\ Behavior\end{tabular}   &$\max(d_{r}/d_{h})$     & $\mu(s)$           & $\mu(d_{r})$                          & $\mu(t)$ \\ \hline
Human              	&1.7\%          & 0             & 0\%             & 1		& $1.1 \pm .2m$  &$ 8.7\pm 1.0m$                      & NA  \\
DistNav\_500$^\ddag$\tnote{1}&6.0\%              & 3.0\%      & 0\%           & 1.18		& $1.01 \pm .32m$     &$ 8.65\pm1.15m$                                                & $1.07\pm0.83s$ \\
DistNav\_100$^\ddag$\tnote{1}&3.0\%           & 1.0\%      & 0\%           & 1.18		& $1.01 \pm .32m$  &$ 8.6\pm1.14m$                                                 & $0.26\pm0.07s$ \\
soIGP              		&13.3\%        & 5\%         & 1\%           & 1.6 		& $.96 \pm .3m$   &$ 8.9\pm1.3m$                                                  & $4.3\pm.2s$ \\
foIGP              		&16.7\%        & 10.5\%    & 3\%           & 1.8		& $ .9 \pm .3m$    &$ 9\pm 1.5m$                                                     &  $4.2\pm.15s$ \\
so\_MC\_1e6$^\dagger$\tnote{2}&30\%         & 18.8\%    & 51\%        & 5.3		& $ .67\pm .3m$   &$ 12.4\pm 3m$                                                    & $6.1\pm0.8s$$^\dagger$\tnote{2} \\
ORCA               		    &63\%       & 48.6\%    & 58\%        & 9.2		& $.43 \pm .3m$    &$11.8\pm 2.4m$                                                 & $0.03\pm.001s$  \\
DWA                		 &35\%          & 23.8\%    & 48\%        & 4.1		 & $ .6\pm .4m$      &$ 12.1\pm 2.5m$                                               & $.1\pm.03s$ \\
CADRL$^*$\tnote{3} 	 &12\%          & 6.6\%       & 80\%       & 14.7		& $2.6 \pm .9m$    &$ 24.35\pm 5.1m$                                        & $2.9\pm.4s$  \\
SARL$^*$\tnote{3}  	&50\%           & 31.5\%    & 0.5\%          & 3.3		& $.4 \pm .15m$    &$ 7.9\pm 2.2m$                                                   & $4.4\pm1.1s$  \\
RGL$^*$\tnote{3} 	&67.4\%        & 48\%       & 73\%       &28.1		&$0.3\pm.15m$     &$ 28.95\pm 12.6m$                                       & $2.1\pm.7s$ \\
MP-RGL$^*$\tnote{3}	&62\%           & 39\%       & 88\%      &22.5		 & $0.33\pm.16m$  &$ 38.1\pm 12.5m$                                         & $0.3\pm.01s$  
\end{tabular*}
\begin{tablenotes}
\item[1] $^\ddag$ DistNav\_500 and DistNav\_100 use 500 and 100 samples for each agent, respectively. They were both run on a 12 core CPU, parallelized and accelerated by the Numba Python package~\cite{lam2015numba}, but no GPU is used.
\item[2] $^\dagger$ so\_MC\_1e6 was run on fully parallelized code on a 64 core CPU to achieve these times.
\item[3] $^*$ SARL, CADRL, RGL, and MP-RGL were trained in 7 different environments; we report the best performing policy.
\end{tablenotes}
\caption{\textbf{ETH supplementary partial trajectory metrics}.  Distance to nearest pedestrian is $s$ and $\mu(s)$ is the mean; ``Discomfort'' and ``Collisions'' are the percent of runs such that $s<0.3m, 0.21m$; ``freezing behavior'' is the percent of robot path lengths 1.25 times longer than the corresponding human path length; $\max(d_{r}/d_{h})$ is the largest value of robot path length divided by corresponding human path length; $\mu(d_r)$ is mean robot path length $d_r$ over all runs;  $\mu(t)$ is mean time of all replanning steps.}
\label{table:pt_full}
\vspace{-10pt}
\end{table*}

\subsection{Pseudocode of DistNav}

See Algorithm \ref{alg: sampling_method}.

\begin{algorithm} [ht] 
    \caption{Sampling-Based Crowd Navigation Based On Sequential Iterative Variational Analysis (DistNav)} \label{alg: sampling_method}
    \SetKwInOut{KwIn}{Input}
    \SetKwInOut{KwOut}{Output}
    \KwIn{$[[\f_0], [\f_1], \dots, [\f_n]]$: Samples representing initial preferences of $n+1$ agents, each agent has $m$ samples. The weight of each sample is initialized as 1. Index $0$ indicates the robot. \\
          $\psi(x_1,x_2)$: collision penalty function. \\
          $[p_1(f), p_2(f), \dots, p_n(f)]$: Original preference\\ distributions of $n+1$ agents. \\
          $\epsilon$: Termination condition.}
    \KwOut{$[f_0^*, f_1^*, \dots, f_n^*]$: Optimal trajectories of $n$ agents selected from samples. Index $0$ indicates the robot.\\}
    $i \leftarrow 0$ \\
    \While{$i\leq n$}{
        $[\f_i]^{(0)} \leftarrow [\f_i]$ \\
        $i \leftarrow i+1$
    }
    $k \leftarrow 0$ \\
    $objective \leftarrow \frac{1}{m}\sum_{i=0}^{n}\sum_{j=i+1}^{n}\sum_{l=0}^{m} \left( \psi(f_{i,m}, f_{j,m}) \cdot w_{i,m}^{(k)} \cdot w_{j,m}^{(k)} \right)$
    \While{$objective \geq \epsilon$}{
        $i \leftarrow 0$ \\
        \While{$i\leq n$}{
            $j \leftarrow 1$ \\
            \While{$j\leq m$}{
                $v \leftarrow \sum_{l=0}^{l=i-1} \sum_{h=1}^{h=m} \left( \psi(f_{i,j}, f_{l,h}) \cdot w_{l,h}^{(k+1)} \right) + \sum_{l=i+1}^{l=n} \sum_{h=1}^{h=m} \left( \psi(f_{i,j}, f_{l,h}) \cdot w_{l,h}^{(k)} \right)$ \\
                $w_{i,j}^{(k+1)} \leftarrow w_{i,j}^{(k)} \cdot \exp(-\frac{v}{m})$ \\
                $j \leftarrow j + 1$
            }
            $j \leftarrow 1$ \\
            \While{$j\leq m$}{
                $w_{i,j}^{(k+1)} \leftarrow w_{i,j}^{(k+1)} / \left(\frac{1}{m} \sum_{l=0}^{m} w_{i,j}^{(k+1)}\right)$ \\
                $j \leftarrow j+1$
            }
            $i \leftarrow i + 1$
        }
        $objective \leftarrow \frac{1}{m}\sum_{i=0}^{n}\sum_{j=i+1}^{n}\sum_{l=0}^{m} \left( \psi(f_{i,m}, f_{j,m}) \cdot w_{i,m}^{(k)} \cdot w_{j,m}^{(k)} \right)$\\
        $k \leftarrow k + 1$
    }
    $i \leftarrow 0$ \\
    \While{$i\leq n$}{
        $f_i^* \leftarrow \argmax_{f_{i,j}^{(k)}} p_i(f_{i,j}^{(k)}) w_{i,j}^{(k)}$ \\
        $i \leftarrow i+1$
    }
    \Return $[f_0^*, f_1^*, \dots, f_n^*]$
\end{algorithm}

\subsection{Proof}

\noindent\textbf{Proof for Theorem \ref{theorem: subproblem_solution}}
\begin{proof}
The subproblem can be considered as an isoperimetric problem with a subsidiary condition (\ref{eq: distribution_constraint}), therefore we first formulate the Lagrangian as
\begin{align}
    \mL(p, \lambda) & = D_{KL}(p\Vert p^{(k)}_i) + \bar{c}_i^{(k)}(p) - \lambda (\int_{\F} p(f) df - 1) \\
    \bar{c}_i^{(k)}(p) & = \int_{\F} p(f) \bar{\gamma}_i^{(k)}(f) df
\end{align} where $\lambda\in\R{}$ is Lagrange multiplier. The necessary condition for $p^*(f)$ to be an extremum for the subproblem can be written as (Theorem 1, Page 43 \cite{gelfand2000calculus}):
\begin{align}
    \frac{\partial \mL}{\partial p}(p^*,\lambda) & = \log p^*(f) + 1 - \log p_i^{(k)}(f) + \bar{\gamma}_i^{(k)}(f) - \lambda = 0 \\
    p^*(f) & = p_i^{(k)}(f) \exp(-\bar{\gamma}_i^{(k)}(f)+\lambda-1) \label{eq: equality_plugin_1}
\end{align}
By substituting (\ref{eq: equality_plugin_1}) into the equality constraint (\ref{eq: distribution_constraint}), we can solve for $\lambda$:
\begin{align}
    \int_{\F} p^*(f) df & = \int_{\F} p_i^{(k)}(f) \exp(-\bar{\gamma}_i^{(k)}(f)+\lambda-1) df \\
    & = \exp(\lambda-1) \int_{\F} p_i^{(k)}(f) \exp(-\bar{\gamma}_i^{(k)}(f)) df = 1 \\
    \exp(\lambda-1) & = \frac{1}{\int_{\F} p_i^{(k)}(f) \exp(-\bar{\gamma}_i^{(k)}(f)) df} \label{eq: equality_plugin_2}
\end{align}
Substituting (\ref{eq: equality_plugin_2}) into (\ref{eq: equality_plugin_1}) gives us:
\begin{align}
    p^*(f) = \frac{p^{(k)}_i(f)\exp(-\bar{\gamma}_i^{(k)}(f))}{\int_{\F} p^{(k)}_i(f)\exp(-\bar{\gamma}_i^{(k)}(f)) df}
\end{align} Since the subproblem objective is unbounded from above, the solution $p^*(f)$ is a global minimum, which completes the proof.
\end{proof}

\begin{lemma} \label{lemma: subproblem_decrease}
The subproblem solution (\ref{eq: subproblem_solution}) can sufficiently decrease the second term $\bar{c}_i^{(k)}(p)$ in the subproblem objective (\ref{eq: subproblem_objective}), if $p_i^{(k+1)}(f) \neq p_i^{(k)}(f)$:
\begin{align}
    \bar{c}_i^{(k)}(p_i^{(k+1)}) & \leq \bar{c}_i^{(k)}(p_i^{(k)}) - \xi \\
    \xi & > 0
\end{align}
\end{lemma}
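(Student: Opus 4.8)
The plan is to exploit the variational characterization of the subproblem minimizer from Theorem 2.1. Write the subproblem objective as $G(p) = D_{KL}(p \Vert p_i^{(k)}) + \bar{c}_i^{(k)}(p)$, and recall that $p_i^{(k+1)}$ is its \emph{global} minimum while $p_i^{(k)}$ is a feasible competitor (it satisfies the normalization constraint). Hence $G(p_i^{(k+1)}) \le G(p_i^{(k)})$. Since $D_{KL}(p_i^{(k)} \Vert p_i^{(k)}) = 0$, this immediately gives
\begin{align}
D_{KL}(p_i^{(k+1)} \Vert p_i^{(k)}) + \bar{c}_i^{(k)}(p_i^{(k+1)}) \le \bar{c}_i^{(k)}(p_i^{(k)}),
\end{align}
so that $\bar{c}_i^{(k)}(p_i^{(k+1)}) \le \bar{c}_i^{(k)}(p_i^{(k)}) - D_{KL}(p_i^{(k+1)} \Vert p_i^{(k)})$. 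The natural candidate for the gap is therefore $\xi \defeq D_{KL}(p_i^{(k+1)} \Vert p_i^{(k)})$.

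The remaining work is to show this gap is \emph{strictly} positive, i.e. $\xi > 0$, under the hypothesis $p_i^{(k+1)}(f) \neq p_i^{(k)}(f)$. This is exactly Gibbs' inequality (nonnegativity of KL divergence) together with its equality case: $D_{KL}(p \Vert q) \ge 0$, with equality if and only if $p = q$ almost everywhere. Since by assumption the two densities differ on a set of positive measure, strict positivity follows. I would state this as a one-line invocation of the standard properties of relative entropy, perhaps noting convexity of $-\log$ and Jensen's inequality if a self-contained argument is desired.

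One subtlety worth addressing explicitly: for the inequality $G(p_i^{(k+1)}) \le G(p_i^{(k)})$ to be meaningful we need $G(p_i^{(k)})$ to be finite, which holds because $\bar{c}_i^{(k)}$ is an expectation of the bounded, nonnegative function $\psi$ (recall $\psi$ is bounded above by $w$ times a Gaussian density, hence integrable) and $D_{KL}(p_i^{(k)} \Vert p_i^{(k)}) = 0$; and we need $p_i^{(k+1)}$ to actually attain the minimum, which is precisely the content of Theorem 2.1. I expect the main (only) obstacle to be purely expository — making sure the "global minimum, feasible competitor" comparison is stated cleanly and that the equality case of Gibbs' inequality is invoked correctly to pin down $\xi > 0$; there is no hard estimate or constant-chasing involved. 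A closing remark could observe that, as written, $\xi$ depends on $i$ and $k$, which is fine for the per-subproblem statement; Theorem 2.2 will later aggregate these strictly positive decreases across the sweep over all $n$ agents.
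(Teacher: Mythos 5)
Your proposal is correct and follows exactly the paper's argument: compare the global minimizer $p_i^{(k+1)}$ against the feasible competitor $p_i^{(k)}$ in the subproblem objective, use $D_{KL}(p_i^{(k)}\Vert p_i^{(k)})=0$, and take $\xi = D_{KL}(p_i^{(k+1)}\Vert p_i^{(k)})$, which is strictly positive by the equality case of Gibbs' inequality when the two densities differ. The only difference is your added (and welcome) care about finiteness of the objective, which the paper leaves implicit.
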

\begin{proof}
Since $p^{k+1}_i$ is the global minimum of the subproblem (\ref{eq: subproblem_objective}), we have:
\begin{align}
    & D_{KL}(p^{(k+1)}_i\Vert p^{(k)}_i) + \bar{c}_i^{(k)}(p_i^{(k+1)}) \\
    & \leq D_{KL}(p^{(k)}_i\Vert p^{(k)}_i) + \bar{c}_i^{(k)}(p_i^{(k)}) = \bar{c}_i^{(k)}(p_i^{(k)}) \\
    & \bar{c}_i^{(k)}(p_i^{(k+1)}) \leq \bar{c}_i^{(k)}(p_i^{(k)}) -  D_{KL}(p^{(k+1)}_i\Vert p^{(k)}_i)
\end{align}
If $p_i^{(k+1)}(f) \neq p_i^{(k)}(f)$, then $D_{KL}(p^{(k+1)}_i\Vert p^{(k)}_i) > 0$, which completes the proof.
\end{proof}

\noindent\textbf{Proof for Theorem \ref{theorem: sufficient_decrease}}
\begin{proof}
Based on Lemma \ref{lemma: subproblem_decrease}, we have for each $i\in\I$, the following inequality holds:
\begin{align}
    \bar{c}_i^{(k)}(p_i^{(k+1)}) \leq \bar{c}_i^{(k)}(p_i^{(k)}) -  D_{KL}(p^{(k+1)}_i\Vert p^{(k)}_i)
\end{align} Summing up left hand side of the inequality for all $i\in\I$ gives us:
\begin{align}
    & \sum_{i=R}^{n} \bar{c}_i^{(k)}(p_i^{(k+1)}) \nonumber \\
    & = \sum_{i=R}^{n} \sum_{j=R}^{i-1} c(p_i^{(k+1)}, p_j^{(k+1)}) + \sum_{i=R}^{n} \sum_{j=i+1}^{n} c(p_i^{(k+1)}, p_j^{(k)}) \\
    & = \sum_{i=R}^{n} \sum_{j=i+1}^{n} c(p_i^{(k+1)}, p_j^{(k+1)}) + \sum_{i=R}^{n} \sum_{j=R}^{i-1} c(p_i^{(k)}, p_j^{(k+1)})
\end{align} The last equality above is based on the structure of the combinatorial summation and the fact that $c(p_i,p_j)=c(p_j,p_i)$.
Meanwhile summing up the right hand side of the inequality for all $i\in\I$ gives us:
\begin{align}
    & \sum_{i=R}^{n} \bar{c}_i^{(k)}(p_i^{(k)}) - \sum_{i=R}^{n} D_{KL}(p^{(k+1)}_i\Vert p^{(k)}_i) \\ 
    & = \sum_{i=R}^{n} \bar{c}_i^{(k)}(p_i^{(k)}) - \xi \\
    & = \sum_{i=R}^{n} \sum_{j=R}^{i-1} c(p_i^{(k)}, p_j^{(k+1)}) + \sum_{i=R}^{n} \sum_{j=i+1}^{n} c(p_i^{(k)}, p_j^{(k)}) - \xi 
\end{align} Now by the combining summation of both sides of the inequality, we would have:
\begin{align}
    & \sum_{i=R}^{n} \sum_{j=i+1}^{n} c(p_i^{(k+1)}, p_j^{(k+1)}) + \sum_{i=R}^{n} \sum_{j=R}^{i-1} c(p_i^{(k)}, p_j^{(k+1)}) \\
    & \leq \sum_{i=R}^{n} \sum_{j=R}^{i-1} c(p_i^{(k)}, p_j^{(k+1)}) + \sum_{i=R}^{n} \sum_{j=i+1}^{n} c(p_i^{(k)}, p_j^{(k)}) - \xi 
\end{align} and therefore
\begin{align}
    & \sum_{i=R}^{n} \sum_{j=i+1}^{n} c(p_i^{(k+1)}, p_j^{(k+1)}) \leq \sum_{i=R}^{n} \sum_{j=i+1}^{n} c(p_i^{(k)}, p_j^{(k)}) - \xi 
\end{align}
If $p_i^{(k+1)}(f)\neq p_i^{(k)}(f)$ for some $i\in\I$, then $\xi=\sum_{i=R}^{n} D_{KL}(p^{(k+1)}_i\Vert p^{(k)}_i)>0$, which completes the proof.
\end{proof}

\subsection{Implementation Details for DistNav}

\subsubsection{Choice of Collision Penalty Function}
The collision penalty for two trajectories should be evaluated on time-aligned elements (poses at each time step), in our implementation we use a Gaussian penalty function and select the maximal collision penalty among all time steps. Even though theoretically a Dirac delta function should work as collision penalty since preference distribution already contains information about ``comfort distance'', for the sampling-based method an explicit collision penalty is still necessary.

\subsubsection{Selection of Critical Agents}

After generating the initial samples for all agents, we compute the weights of all other agents' GP preferences on the robot's intent (GP mean) based on (\ref{eq: sample_weight_computation}) as the ``interaction score''. Agents with scores higher than a user-defined threshold are considered as critical agents for interaction, and thus are included for coupled prediction and planning. This process could drastically reduce the computation time.

\end{appendix}

\end{document}